\newtheorem{theorem}{Theorem}
\newtheorem{definition}{Definition}
\newtheorem{lemma}{Lemma}
\newtheorem{proof}{Proof}
\newtheorem{proofsketch}{Proof Sketch}
\newcommand{\argmax}{\mathop{\rm arg~max}\limits}
\newcommand{\Expe}[2]{\ensuremath{\mathbb{E}_{#1}\left[#2\right]}}
\newcommand{\State}{\ensuremath{\mathcal{S}}}
\newcommand{\action}{\ensuremath{\mathcal{A}}}
\newcommand{\trans}{\ensuremath{\mathcal{T}}}
\newcommand{\reward}{\ensuremath{R}}
\newcommand{\Real}{\mathbb{R}}
\newcommand{\rpos}{\mathbb{R}_{\geq 0}}
\newcommand{\his}{\mathcal{H}}
\newcommand{\method}{\textsc{PGC}\xspace}
\title{Locally Private Distributed Reinforcement Learning}
\author{%
  Hajime Ono\thanks{Equal contribution.}\\
  LINE Corporation\\
  University of Tsukuba\\
  \texttt{hajime@mdl.cs.tsukuba.ac.jp} \\
  \And
  Tsubasa Takahashi\footnotemark[1]\\
  LINE Corporation\\
  \texttt{tsubasa.takahashi@linecorp.com} \\
}
\begin{document}

\maketitle

\begin{abstract}
We study locally differentially private algorithms for reinforcement learning to obtain a robust policy that performs well across distributed private environments.
Our algorithm protects the information of local agents' models from being exploited by adversarial reverse engineering.
Since a local policy is strongly being affected by the individual environment, the output of the agent may release the private information unconsciously.
In our proposed algorithm, local agents update the model in their environments and report noisy gradients designed to satisfy local differential privacy (LDP) that gives a rigorous local privacy guarantee.
By utilizing a set of reported noisy gradients, a central aggregator updates its model and delivers it to different local agents.
In our empirical evaluation, we demonstrate how our method performs well under LDP.
To the best of our knowledge, this is the first work that actualizes distributed reinforcement learning under LDP.
This work enables us to obtain a robust agent that performs well across distributed private environments.
\end{abstract}

\section{Introduction}

Recent advancement of reinforcement learning (RL) shows great success within broad domains ranging from market strategy decisions~\cite{Abe2004marketing}, load balancing~\cite{Cogill2006load} to autonomous driving~\cite{Shalev-Shwartz2016driving}.
Reinforcement learning is a process to obtain a good policy in a given environment in an unsupervised learning manner.
Distributed reinforcement learning (DRL) is known as a practical solution to accelerate reinforcement learning in parallel~\cite{Mnih2016a3c,nair2015golira, Palmer2019multi, Bacchiani2019multi}.
It also gives us \textit{robust} policies across different environments.
A policy is regarded as \textit{robust} if it performs well across various environments, but not overfitting a simulated environment.
A policy overfitting a simulated environment does not work well for the real-world environment~\cite{rajeswaran2016epopt}.

In case that the local environments are related to private information, such as private rooms and individual properties, there are privacy issues.
Since a locally learned policy is strongly being affected by the individual environment, the output of the agent may release the private information unconsciously.
\cite{pan2019you} pointed out that reinforcement learning can cause privacy issue.
They proposed an attack to recover the dynamics of agents through estimating the transition dynamics with their state space, action space, reward function, and trained policy.
For example, from the policy of a robot cleaner trained in an individual's room, an adversary can estimate the room layout if she can access the policy.
In the distributed settings, sending information by the local agent has serious privacy risks if we do not believe the central aggregator.

Local differential privacy (LDP) \cite{kasiviswanathan2011can, Duchi2013minimax} gives a rigorous privacy guarantee when data providers send information to a data curator.
Mechanisms ensuring LDP makes outputs indistinguishable values regardless of the input.
In this paper, we aim to design locally differentially private algorithms for DRL, such that reported information from local agents is indistinguishable.

\begin{figure}[t]
    \centering
    \includegraphics[width=0.68\hsize]{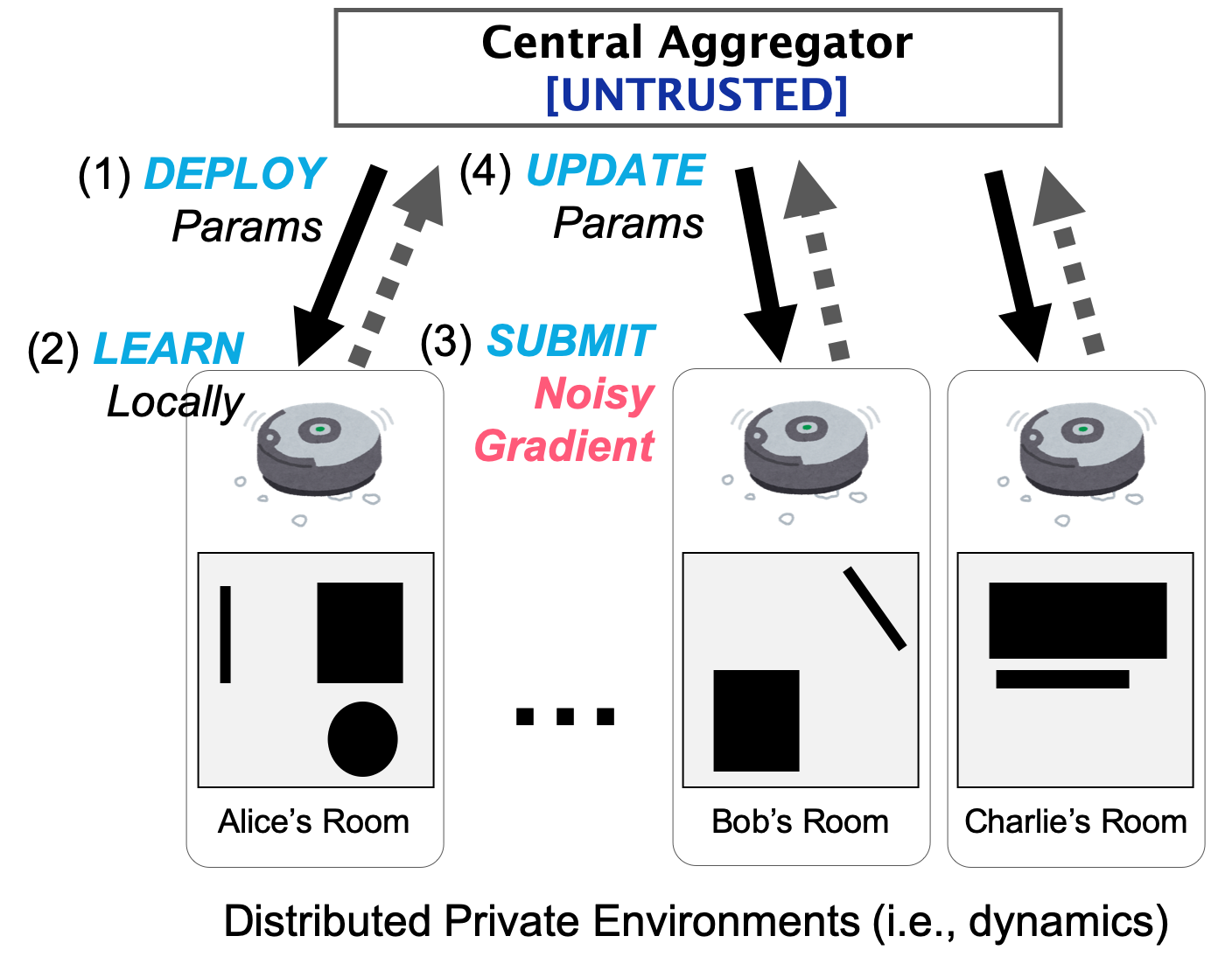}
    \caption{Private Gradient Collection (\method) framework. The framework aims to lean a robust policy based on the reported noisy gradients that satisfy LDP from local agents. The central aggregator updates his model by the noisy gradients, and distributes the updated model to different agents for making the policy robust across various environments.}
    \label{fig:overview}
\end{figure}

To achieve the DRL under LDP constraints, we develop a framework that leans a robust policy based on the reported information from the agents while preserving local privacy of them (Figure \ref{fig:overview}).
We call the framework \textit{Private Gradient Collection} (\method).
In the framework, first, the central aggregator distributes a global model to several local agents.
Second, the local agents update the model at local private environments.
Third, the agents report noisy gradients that satisfy LDP to the central aggregator.
At last, the central aggregator updates the global parameters by utilizing a set of reported noisy gradients.
After updating the global model, the central aggregator distributes the model to the other agents to learn more and more. 
Following the above way, local agents can report their updates by submitting noisy gradients even if the local nodes do not have any deliverable data.
Besides, the central aggregator easily updates the global model by just applying the collected gradients to the model, without any privacy concerns of the local agents.

To the concrete realization of the framework, we introduce an algorithm based on asynchronous advantage actor-critic (A3C) based.
For introducing randomness that satisfies LDP, we present two mechanisms for the gradient submission.

To the best of our knowledge, this is the first work that actualizes  DRL under local differential privacy.
In this paper, we show that our algorithm ensures LDP guarantees by utilizing a series of techniques.
In our empirical evaluations, we demonstrate how our method learns the robust policy effectively even it is required to satisfy local differential privacy.
This work enables us to obtain a robust agent that performs well across distributed private environments.

\subsection{Related Works}

For privacy-preserving distributed reinforcement learning, both cryptographical and differentially private approaches have been studied.

The cryptographical approaches conceal information during their learning process \cite{Zhang2005nego, Sakuma2008pprl}.
However, if several agents cooperate, they have chances to estimate the information of the other agents.
Our proposed method under LDP is robust against the cooperated adversarial parties.
Even if all other remaining agents attack an agent, the agent's dynamics are indistinguishable from the different candidate dynamics.

Noisy DQN~\cite{Fortunato2018noisy} is a DQN~\cite{Mnih2015human} variant that aims to improve learning stability by injecting Gaussian noise, but it has no way to preserve privacy.
\cite{wang2019privacy} introduced differentially private Q-learning in continuous spaces.
\cite{zhu2019applying} introduced a cooperative multi-agent system that chooses advice information from neighboring agents in a differentially private way.
\cite{chamikara2019local} proposed a private distributed learning framework that craft perturbed data satisfying LDP at local data holders and send it to the untrusted curator.
We focus on DRL that agents do not have such deliverable data, but submit perturbed gradients.
\section{Preliminaries}

Before detail discussions, we introduce essential background notations, definitions, and related works to understand our proposals.

\subsection{(Local) Differential Privacy}\label{ssec:ldp}

Differential privacy \cite{dwork2006cali} is a rigorous privacy definition, which quantitatively evaluates the degree of privacy protection when releasing statistical aggregates.
While \textit{local differential privacy} (LDP) \cite{kasiviswanathan2011can, Duchi2013minimax} gives a rigorous privacy guarantee when data providers send information to a data curator.
Suppose the data providers send information to a collector via some random mechanism $\mathcal{Q}$.
\begin{definition}
For all possible input $x, x' \in X$ and for any subset of outputs $Y \subseteq Z$,
a randomized mechanism $Q$ satisfies $\varepsilon$-local differential privacy if it holds that
\begin{equation}
  \frac{\Pr(\mathcal{Q}(x)\in Y)}{\Pr(\mathcal{Q}(x')\in Y)}\leq e^\varepsilon 
  \label{eq:ldp}
\end{equation}
where $e$ is the Napier number.
\end{definition}
The definition requires $\mathcal{Q}$ to output indistinguishable values regardless of the input.
An essential property of a mechanism $\mathcal{Q}$ is the (global) sensitivity of the output.

\begin{definition}
For all input $x, x'\in X$, the sensitivity of a mechanism $Q$ is defined as
\begin{equation}
    \Delta_{\mathcal{Q}} = \sup_{x, x'\in X}||\mathcal{Q}(x) - \mathcal{Q}(x')||.
    \label{eq:sensitivity}
\end{equation}
\end{definition}

The sequential composition is the property that describes an intuition that more outputs more violate privacy.
\begin{theorem}\label{thm:seqcomp}
Let $\mathcal{Q}_1,\mathcal{Q}_2,\hdots,\mathcal{Q}_K$ be a series of mechanisms.
Assume that $\mathcal{Q}_i$ satisfies $\varepsilon_i$-(L)DP for each $i=1,\hdots, K$, respectively.
Then, the series of mechanisms satisfies $(\sum_{i\in[K]}\varepsilon_i)$-(L)DP.
\end{theorem}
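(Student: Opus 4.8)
The plan is to prove the sequential composition theorem directly from the definition of $\varepsilon$-LDP, working with the joint output of the $K$ mechanisms. First I would set up notation: let $\mathcal{Q} = (\mathcal{Q}_1, \dots, \mathcal{Q}_K)$ denote the mechanism that, on input $x$, returns the tuple $(\mathcal{Q}_1(x), \dots, \mathcal{Q}_K(x))$, where the internal randomness of the $\mathcal{Q}_i$ is independent across $i$. I want to show that for any pair of inputs $x, x'$ and any measurable set $Y$ of output tuples, $\Pr(\mathcal{Q}(x) \in Y) \le e^{\sum_i \varepsilon_i} \Pr(\mathcal{Q}(x') \in Y)$.

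The key step is to reduce the statement about sets to a statement about pointwise likelihood ratios. By independence of the mechanisms' internal coin flips, the density (or probability mass) of the joint output $(z_1, \dots, z_K)$ on input $x$ factors as $\prod_{i=1}^K p_i(z_i \mid x)$, where $p_i(\cdot \mid x)$ is the output distribution of $\mathcal{Q}_i$. The $\varepsilon_i$-LDP guarantee for $\mathcal{Q}_i$ gives $p_i(z_i \mid x) \le e^{\varepsilon_i} p_i(z_i \mid x')$ for every $z_i$ (this pointwise form follows from the set-based definition applied to singletons, or to shrinking neighborhoods in the continuous case). Multiplying these $K$ inequalities yields
\begin{equation}
  \prod_{i=1}^K p_i(z_i \mid x) \le e^{\sum_{i=1}^K \varepsilon_i} \prod_{i=1}^K p_i(z_i \mid x')
\end{equation}
for every tuple $(z_1, \dots, z_K)$. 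Integrating (or summing) both sides over the region $Y$ then gives $\Pr(\mathcal{Q}(x) \in Y) \le e^{\sum_i \varepsilon_i} \Pr(\mathcal{Q}(x') \in Y)$, which is exactly $(\sum_i \varepsilon_i)$-LDP for the combined mechanism.

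I would then note that the same argument handles the adaptive case that matters in practice, where $\mathcal{Q}_i$ may depend on the previously reported outputs $z_1, \dots, z_{i-1}$: one replaces $p_i(z_i \mid x)$ by the conditional density $p_i(z_i \mid x, z_1, \dots, z_{i-1})$, and the LDP bound for $\mathcal{Q}_i$ holds for each fixed value of the conditioning outputs, so the product telescopes in the same way. The main obstacle is really a measure-theoretic bookkeeping issue rather than a conceptual one: justifying the passage from the set-based privacy definition in \eqref{eq:ldp} to the pointwise likelihood-ratio bound requires either restricting to discrete output spaces (where singletons suffice) or invoking a Radon–Nikodym / Lebesgue differentiation argument in the continuous case, and one must check that a common dominating measure exists so that the densities $p_i(\cdot \mid x)$ and $p_i(\cdot \mid x')$ are simultaneously well-defined. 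For the purposes of this paper it is enough to state the result for the output spaces actually used by the mechanisms introduced later, where this technicality does not arise.
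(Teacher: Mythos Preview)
Your argument is correct and is the standard proof of sequential composition. Note, however, that the paper does not actually supply a proof of this theorem: it is stated in the preliminaries section as a known background result and is later invoked (in the discussion of multiple submissions) without further justification. So there is no paper proof to compare against; your write-up would serve as a self-contained justification that the paper omits.
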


Post-processing invariance is a property that differentially private information never harm privacy anymore.
\begin{theorem}
For any deterministic or randomized function $f$ defined over the mechanism $\mathcal{Q}$, if $\mathcal{Q}$ satisfies $\varepsilon$-(L)DP, $f(\mathcal{Q}(x))$ also satisfies $\varepsilon$-(L)DP for any input $x \in X$.
\end{theorem}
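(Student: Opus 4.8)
The plan is to reduce the randomized case to the deterministic case, and then settle the deterministic case directly from the definition of $\varepsilon$-LDP applied to a preimage set. First I would take $f$ to be a deterministic map defined on the output space $Z$ of $\mathcal{Q}$. For any admissible set of outputs $Y$ in the range of $f$, the event $\{f(\mathcal{Q}(x)) \in Y\}$ is identical to the event $\{\mathcal{Q}(x) \in f^{-1}(Y)\}$, where $f^{-1}(Y) = \{z \in Z : f(z) \in Y\}$. Since the LDP inequality \eqref{eq:ldp} is required to hold for \emph{every} subset of outputs of $\mathcal{Q}$, I would instantiate it with the particular subset $f^{-1}(Y)$:
\begin{align*}
  \Pr(f(\mathcal{Q}(x)) \in Y) &= \Pr(\mathcal{Q}(x) \in f^{-1}(Y)) \\
  &\le e^{\varepsilon}\,\Pr(\mathcal{Q}(x') \in f^{-1}(Y)) = e^{\varepsilon}\,\Pr(f(\mathcal{Q}(x')) \in Y),
\end{align*}
which is exactly $\varepsilon$-LDP for $f \circ \mathcal{Q}$.

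Next I would treat a randomized $f$ by representing it as a deterministic function of its input together with an auxiliary source of randomness that is independent of $\mathcal{Q}$: write $f(z) = g(z, r)$ with $r$ drawn from a fixed distribution independently of everything else. Conditioning on the value of $r$, the map $g(\cdot, r)$ is deterministic, so the previous step yields, for each realization of $r$,
\[
  \Pr(g(\mathcal{Q}(x), r) \in Y \mid r) \le e^{\varepsilon}\,\Pr(g(\mathcal{Q}(x'), r) \in Y \mid r).
\]
Taking the expectation over $r$ preserves the inequality because $e^{\varepsilon}$ is a constant, and by independence $\mathbb{E}_{r}[\Pr(g(\mathcal{Q}(x), r) \in Y \mid r)] = \Pr(f(\mathcal{Q}(x)) \in Y)$, with the analogous identity at $x'$. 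Combining these gives $\Pr(f(\mathcal{Q}(x)) \in Y) \le e^{\varepsilon}\Pr(f(\mathcal{Q}(x')) \in Y)$ for all $x, x' \in X$ and all output sets $Y$, which is the claim.

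The step I expect to need the most care is not the chain of inequalities itself but the measure-theoretic bookkeeping around it: verifying that $f^{-1}(Y)$ is a legitimate measurable output event so that \eqref{eq:ldp} may be applied to it, and justifying the factorization $f(z) = g(z, r)$ with independent $r$ together with the interchange of the expectation over $r$ and the conditional probabilities (a routine appeal to the law of total probability / Fubini). For the discrete or standard Borel output spaces relevant to the gradient-submission mechanisms studied here, these points are standard, so the proof is in essence the two displayed inequalities above.
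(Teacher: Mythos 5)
Your proof is correct: the preimage argument for deterministic $f$ followed by conditioning on independent auxiliary randomness for randomized $f$ is the standard textbook proof of post-processing invariance, and your handling of the measurability and independence caveats is appropriate. Note that the paper states this theorem only as a known background property (citing the differential-privacy literature) and supplies no proof of its own, so there is nothing to compare against; your argument is the canonical one.
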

Because of the property, in a local private setting, the curator is allowed to run arbitrary processing for the collected data.

\textbf{Laplace mechanism.}
Laplace mechanism is the well known randomized mechanism that samples randomized values from the Laplace distribution.
The Laplace distribution is designed based on the sensitivity of the target function outputs. 
The mechanism samples the randomized output $\tilde{y}$ from the Laplace distributions denoted as
\begin{equation}
  \mathbb{P}(\tilde{y}_i|y_i) =\frac{\varepsilon}{2\Delta_{\mathcal{Q}}}\exp\left(-\frac{\varepsilon}{\Delta_{\mathcal{Q}}}|\tilde{y}_i-y_i|\right) 
\end{equation}
where $y_i$ is the $i$-th element of $y$.

\textbf{Bit flip.}
Bit flip~\cite{Ding2018comparing,Ding2017telemetry} $\mathcal{Q}_{\text{bf}}:[-1, 1]\rightarrow\{-1, +1\}$ is a randomization technique for satisfying (L)DP.
For input $x\in[-1, +1]$,
\begin{equation}
  \mathcal{Q}_{\text{bf}}(x) =
  \begin{cases}
    +1 & \text{with probability } \frac{1}{e^\varepsilon+1} + \frac{x+1}{2}\frac{e^\varepsilon-1}{e^\varepsilon+1}\\
    -1 & \text{otherwise}.
  \end{cases}  
  \label{eq:bf}
\end{equation}
By the bit flip, the randomized outputs have sharp directions.

Random projection ~\cite{Johnson1984jl, Achlioptas2001random, Bingham2001random} is a useful technique that reduces the dimensionality of the vector by a random matrix.
We can use the random matrix $M=(m_{i,j})\in\Real^{\hat{d}\times d}$ such that
\begin{equation}
  \forall i\in[\hat{d}], j\in[d], m_{i,j} = \begin{cases}
    -\sqrt{3} & \text{w/ probability } \frac{1}{6}\\
    0 & \text{w/ probability } \frac{2}{3}\\
    +\sqrt{3} & \text{w/ probability } \frac{1}{6},
  \end{cases}
  \label{eq:randomMatrix}
\end{equation}
where $\hat{d}$ is the dimension of mapped space.
The random matrix has the useful property that the column vectors of $M$ are almost orthogonal each other~\cite{Achlioptas2001random, Bingham2001random}.
Thanks to the property, we can approximately recover the original vector using the transposed matrix $M$ from the compressed vector.

\subsection{Distributed Reinforcement Learning}

Benefits of distributed reinforcement learning (DRL) are increasing learning efficiency and obtaining a robust policy.
We focus on the later while preserving the privacy of local agents.
Not only the distributed setting, but robust RL is also the generalization of RL, which adapts uncertainty of the transition dynamics~\cite{Morimoto2005robust,rajeswaran2016epopt, Pinto2017robust}.
Our goal is to obtain a transferable policy that performs well across various environments.

Suppose there are a central aggregator and $N$ distributed agents.
Agent $n$ moves around on the Markov decision process~(MDP) which is characterized with common state space $\State$, common action space $\action$, common reward function $\reward:\State\times\action\rightarrow\rpos$, common discounting factor $\gamma\in(0, 1)$ and local transition dynamics $\trans(\psi_n)$.
$\State$ contains some terminal states.
Each local dynamics $\trans(\psi_n):\State\times\action\rightarrow\Delta_{\State}$ decides the next state after an action on a state where $\Delta_{\State}$ is the probability simplex on \State. 
Local dynamics $\trans(\psi_n)$ is parametrized with $\psi_n\in\Psi$ where $\Psi$ is a parameter set.
For each round $t$, agent $n$ has state $s_{n,t}\in\State$ and takes action $a_{n,t}\in\action$.
After the action, the agent gets reward $r_{n,t}$ from $\reward$, and the state transits $s_{t+1}$.
$\reward$ gives $0$ to the terminal states.
The transition follows local transition dynamics $\trans(\psi_n)$.
Agent $n$ decides its action by policy $\pi:\State\rightarrow\Delta_{\action}$, which is shared by all agents.
The central aggregator trains the policy with the cooperation of the agents.
Defining $\rho$ as the initial state distribution, for each agent $n\in[N]=\{1,2,\hdots,N\}$, 
history $\his_n=(s_{n,0},a_{n,0},s_{n,1},a_{n,1},\hdots)$ is the random variable such that
\begin{equation}
\begin{split}
    \mathbb{P}(\his_n|\pi) = \rho(s_{n,0})\prod_{t=1}^{\infty} & \mathbb{P}(a_{n, t-1}|\pi, s_{n, t-1}) \\
    &\mathbb{P}(s_{n,t}|\trans(\psi_n), s_{n,t-1}, a_{n, t-1}).  
\end{split}
\end{equation}
where $\{\his_n\}_{n\in[N]}$ is determined with the MDP and the policy.
To obtain a robust policy, which works well on some dynamics in the possible dynamics set $\{\trans(\psi):\psi\in\Psi\}$, we solve an optimization problem.
The objective function is
\begin{equation}
  \argmax_{\pi}\sum_{n\in[N]}\Expe{\his_n}{\sum_{t=0}^{\infty}\gamma^t\reward(s_{n, t},a_{n,t})}.
  \label{eq:obj}
\end{equation}
where $\gamma\in(0,1)$ is the discounting factor. 

\subsubsection{Asynchronous Advantage Actor-critic}

\textit{Asynchronous advantage actor-critic}~(\textit{A3C})~\cite{Mnih2016a3c} is a DRL framework, which is originally proposed for acceleration of policy training in parallel.
On distributed A3C protocol, each agent optimizes both policy and approximation of a state-value function.
The policy is denoted as $\pi(\cdot|\cdot;\theta^{(c)},\theta^{(p)}):\action\times\State\rightarrow[0, 1]$.
For some $n\in[N]$ and some $t$, $\pi(a_{n,t}=a|s_{n,t}=s;\theta^{(c)},\theta^{(p)})$ represents the confidence for action $a$ on $s$.
Based on the confidence, each agent decides the next action at each time step.
State-value function $V^\pi:\State\rightarrow\rpos$ is the function which represents the value of each state on policy $\pi$.
\begin{equation}
  V^{\pi}(s) \equiv \Expe{}{\sum_{k=0}^\infty\gamma^kr_{t+k}\Bigg|s_t=s}.
\end{equation}
Further, $V(\cdot;\theta^{(c)},\theta^{(v)}): \State\rightarrow\Real$ denotes the approximated state value function.
For simplicity, this paper denotes $\pi(\cdot|\cdot;\theta)$ and $V(\cdot;\theta)$ as $\pi(\cdot|\cdot;\theta^{(c)},\theta^{(p)})$ and $V(\cdot;\theta^{(c)},\theta^{(v)})$.

The objective function is defined as follows:
\begin{equation}
    L(\theta) = \sum_{n\in[N]}L^{(p)}_n(\theta) + \lambda L^{(v)}_n(\theta)
    \label{eq:a3c_obj}
\end{equation}
where
\begin{equation}
    \begin{split}
        L^{(p)}_n = -\mathbb{E}_{}\Bigg[&\sum_{t=0}^{T_n}\log(\pi(a_{n,t}|s_{n,t};\theta))A(s_{n,t};\theta)\\
    &-\beta \sum_{t=0}^{T_n}H(\pi(\cdot|s_{n,t};\theta))\Bigg],\\
    \end{split}
\end{equation}
\begin{equation}
    L^{(v)}_n = \Expe{}{\sum_{t=0}^{T_n}(V^\pi(s_{n,t})-V(s_{n,t};\theta))^2}
\end{equation}
\begin{equation}
    \begin{split}
        A(s_{n,t};\theta) = &\sum_{j=t}^{T_n-1}\left(\gamma^{j-t}r_{j}\right) \\
    &+\gamma^{T_n-t}V(s_{n,T_n};\theta) - V(s_{n,t};\theta)
    \end{split}
\end{equation}
\begin{equation}
    H(\pi(\cdot|s_{n,t};\theta)) = \sum_{a\in\action}-\pi(a|s_{n,t};\theta)\log\left(\pi(a|s_{n,t};\theta)\right)
\end{equation}
with terminal state $s_{n, T_n}$ and some small positive real values $\beta$ and $\lambda$.
$L^{(p)}(\theta)$ is the loss of policy, and $L^{(v)}(\theta)$ is the loss of the estimation of value function.
Decreasing (\ref{eq:a3c_obj}) increases the true objective (\ref{eq:obj}).

\section{Locally Differentially Private Actor-Critic}

\subsection{Our Algorithm}

We here present our algorithm for DRL under local differential privacy.
We first introduce an overview and an abstract model of our algorithm.
We call the abstract model \textit{Private Gradient Collection} (\method).
Based on the model, we present our algorithm \method-A3C that is a method based on A3C with satisfying $\varepsilon$-LDP for all local agents.
We also address the privacy analysis of the proposed method and give some extensions.

\subsubsection{Private Gradient Collection}
Suppose the central aggregator has a model parameterized with $\theta\in\Real^d$  where $d$ is the dimensionality of $\theta$, and trains $\theta$ by utilizing reported information from local agents.
The central aggregator and all local agents share the parameters $\theta$, the structure of the model, and loss function $L$.

The abstract model \method follows the below four steps:
\begin{enumerate}
\setlength{\parskip}{0pt}
\setlength{\itemsep}{0pt}
\setlength{\leftskip}{0pt}
    \item The central aggregator delivers $\theta$ to local agents.
    \item Each local agent initializes her parameters $\theta'$ by $\theta$ and updates $\theta'$ in her local private environment.
    \item The local agent reports information about her model with injecting noise to satisfy $\varepsilon$-LDP.
    \item The central aggregator updates $\theta$ by only utilizing the received noisy information from the local agents.
\end{enumerate}

The primal question to design the model is what information should local agents report?
Our answer is \textit{stochastic gradient}.
Hence, the local agent computes loss and its gradient behind local observations and rewards, then submits a \textit{noisy gradient} to the central node.

In the local training process, each agent inputs $s_{n,t}$ to the network and obtains the next action $a_{n,t}$ or some information to decide the action.
At the end of an episode, with history $\his_n$ and observed rewards $\{r_{n,1},r_{n,2},\hdots,\}$, agent $n$ evaluates $\theta'$ by the loss function L.
After the evaluation, the agent computes stochastic gradient $g_n$ of $\ell$ along $\theta'$.
Before reporting to the central aggregator, she randomizes the gradient. 
The randomness (e.g., additive noise) is designed to satisfy $\varepsilon$-LDP via random mechanism $\mathcal{Q}$. 
\begin{definition}($\varepsilon$-LDP for gradient submissions)
For each $n\in[N]$, any $\psi,\psi'\in\Psi$ and any subset $G\subset \Real^d$, the following inequality must hold:
    \begin{equation}
        \frac{\Pr(\tilde{\mathbf{g}}_n\in G|\psi_n=\psi, \tilde{\mathbf{g}}_1,\hdots,\tilde{\mathbf{g}}_{n-1})}
    {\Pr(\tilde{\mathbf{g}}_n\in G|\psi_n=\psi',  \tilde{\mathbf{g}}_1,\hdots,\tilde{\mathbf{g}}_{n-1})}\leq e^\varepsilon.
    \end{equation}
\end{definition}

With the noisy gradients, the central aggregator updates $\theta$.
Then, the updated parameters are shared with all distributed agents again.
To make the problem simple, we assume that one agent submits the gradient only once.
Because of the post-processing invariant, the central aggregator can apply the noisy gradients to the parameters in any way.
She can use any gradient method and can use a submitted gradient multi-time.

\subsubsection{\method-A3C}

As a concrete realization of the \method framework, we propose \method-A3C, which is an LDP variant of A3C.
Following the \method framework, \method-A3C employs the gradient submissions with a randomized mechanism $\mathcal{Q}$ from local agents to the central aggregator.
The other procedure follows the original algorithm of A3C.
Algorithm \label{alg:ldpa3c} is the overall procedure of \method-A3C.

\paragraph{Empirical Loss Minimization.}
As well as vanilla A3C, based on the episode history $\his_n$, the local agent evaluates empirical loss:
\begin{equation}
    \hat{L}_n(\theta';\his_n, \lambda) \equiv \hat{L}^{(p)}_n(\theta';\his_n) + \lambda \hat{L}^{(v)}_n(\theta';\his_n)
\end{equation}
where
\begin{equation}
    \begin{split}
        \hat{L}^{(p)}_n(\theta';\his_n) =& -\sum_{t=0}^{T_n}\log(\pi(a_{n,t}|s_{n,t};\theta'))A(s_{n,t};\theta')\\
    &-\beta \sum_{t=0}^{T_n}H(\pi(\cdot|s_{n,t};\theta')) \\
    \end{split}
\end{equation}
\begin{equation}
    \hat{L}^{(v)}_n(\theta';\his_n) = \sum_{t=0}^{T_n}(\hat{V}(s_{n,t})-V(s_{n,t};\theta'))^2
\end{equation}
\begin{equation}
    \hat{V}(s_{n,t}) = \sum_{j=t}^{T_n-1}\gamma^{j-t}r_{j}+V(s_{n,T_n};\theta').
\end{equation}
The empirical loss replaces random value $\his_n$ in Equation \ref{eq:a3c_obj} with observed $\his_n$.
After the evaluation, the agent computes stochastic gradient $\mathbf{g}_n$ of the empirical loss along $\theta'$.
For the stochastic gradient, each agent crafts the noisy gradient $\hat{\mathbf{g}}_n$ by a randomized mechanism $\mathcal{Q}$ to satisfy $\varepsilon$-LDP, and submits $\hat{\mathbf{g}}_n$ to the central aggregator.

\paragraph{Crafting Noisy Gradient.}
We discuss how to craft a noisy gradient that satisfies $\varepsilon$-LDP.
A simplest way to craft the gradient is to follow the Laplace mechanism that is well-known in differential privacy literature.
However, for a stochastic gradient, it is hard to deal with its sensitivity.
We employ clipping technique to bound the sensitivity.
\begin{equation}
\bar{\mathbf{g}} = \frac{\mathbf{g}}{\max\{1,\frac{\|\mathbf{g}\|_1}{C/2}\}}
\label{eq:clip}
\end{equation}
where $\mathbf{g}$ is a stochastic gradient vector, $C$ is clipping size.
Each agent clips the gradient by the norm with a positive constant $C/2$, and then the sensitivity is bounded by $C$.
That is, any two clipped gradients $\bar{\mathbf{g}},\bar{\mathbf{g}}'$ satisfies
\begin{align}
    \|\bar{\mathbf{g}}-\bar{\mathbf{g}}'\|_1 \leq C.
    \label{eq:bound_sen}
\end{align}
Based on the clipping (\ref{eq:clip}) and the sensitivity bounded by $C$ (\ref{eq:bound_sen}), each agent generate the Laplace noise $\mathbf{z}$ such that:
\begin{equation}
    \mathbb{P}(z_i) =\frac{\varepsilon}{2C}\exp\left(-\frac{\varepsilon}{C}|z_i|\right)
    \label{eq:lapnoise}
\end{equation}
where $z_i$ is the $i$-th dimensional value of $\mathbf{z}$.
With the noise $\mathbf{z}$, each agent report noisy gradient $\bar{\mathbf{g}}+\mathbf{z}$ to the central aggregator.
This procedure is described in Algorithm \ref{alg:laplace}.

\paragraph{Updating Global Parameter with Buffer.}
The central aggregator updates his global parameter $\theta$ by received gradients from the local agents.
To reduce the variance of the noisy gradients, we introduce a temporal storage \textit{buffer}.
The central aggregator first stores multiple noisy gradients into the buffer $B$, and update $\theta$ with utilizing all $\tilde{\mathbf{g}} \in B$ as
\begin{equation}
    \theta = \theta - \eta \frac{1}{|B|}\sum_{\tilde{\mathbf{g}}\in B}\tilde{\mathbf{g}}.
    \label{eq:update_param}
\end{equation}
The central aggregator does not utilize any other information about local agents except received noisy gradients.
Therefore, the update process, which is the post-processing of all received gradients, does not violate $\varepsilon$-LDP for any local agents.
After updating $\theta$, the central aggregator flushes the buffer $B$.
We expect that the buffering improves learning stability as well as mini-batch learning.

\begin{algorithm}[tb]
    \caption{\method-A3C}
    \label{alg1}
    \begin{algorithmic}
        \STATE {\bfseries Input:} $N$ agents, reward function $R$, and randomized mechanism $\mathcal{Q}$
        \STATE {\bfseries Parameters:} privacy parameter $\varepsilon$, reduced dimension $\hat{d}$, clip parameter $C$, maximum buffer size $\text{MAX\_BUF}$, learning rate $0<\eta<1$ and scale parameter $\lambda$
        
        \STATE{Initialize buffer $B \leftarrow\{\}$}
        \STATE{Initialize global parameters $\theta$}

        \REPEAT
            \STATE{\textbf{// agent $n$ asynchronously begins a local process}}
            \STATE{Copy parameters $\theta'\leftarrow\theta$ \label{line:copy}}
            \STATE{Initialize step counter $t\leftarrow 0$}
            \STATE{Get initial state $s_{n,0}$ \label{line:initial_state}}
            
            \WHILE{$s_{n,t}$ is not a terminal state}
                \STATE{$a_{n,t} = \argmax_{a\in\action} \pi(a|s_{n,t}; \theta')$}
                \STATE{Receive $r_{n,t}$ and $s_{n,t+1}$ following $\reward$ and $\trans(\psi_n)$}
                \STATE{$t\leftarrow t+1$}
            \ENDWHILE
            
            \STATE{$\mathbf{g}_n\leftarrow \frac{\partial}{\partial \theta'}\hat{L}_n(\theta';\his_n, \lambda)$\label{line:grad}}
            \STATE{$\tilde{\mathbf{g}}_n \leftarrow \mathcal{Q}(\mathbf{g}_n;\varepsilon, d, \hat{d}, C)$\label{line:randomize}}
            \STATE{Send $\tilde{\mathbf{g}}_n$ to central controller\label{line:submit}}
            
            \STATE{\textbf{// agent $n$ ends the local process}}
            \STATE{$B \leftarrow B\cup \{\tilde{\mathbf{g}}_n\}$}
            \IF{$|B|\geq \text{MAX\_BUF}$}
                \STATE{Perform asynchronous update of $\theta$ with $B$ by (\ref{eq:update_param})}
                \STATE{$B\leftarrow\{\}$ // \textbf{clear buffer}}
            \ENDIF
        \UNTIL{$N$ submissions received}
    \end{algorithmic}
\end{algorithm}

\begin{algorithm}[t]
    \caption{Laplace mechanism (for gradient submissions)}
    \label{alg:laplace}
    \begin{algorithmic}
        \STATE {\bfseries Input:} gradient vector $\mathbf{g}$, privacy parameter $\varepsilon$, dimensionality of the gradient vector $d$,  and clipping size $C$
        \STATE{$\bar{\mathbf{g}}\leftarrow$ clip vector $\mathbf{g}$ as (\ref{eq:clip})}
        \FOR{$i\in[d]$}
            \STATE {generate $z_i$ such that (\ref{eq:lapnoise}).}
        \ENDFOR
        \STATE {\bfseries Output:} $\bar{\mathbf{g}} + \mathbf{z}$
    \end{algorithmic}
\end{algorithm}

\begin{algorithm}[t]
    \caption{Projected Random Sign Mechanism}
    \label{alg:bit_flip}
    \begin{algorithmic}
        \STATE{\bfseries Input:} gradient vector $\mathbf{g}$, privacy parameter $\varepsilon$, original dimension $d$, reduced dimension $\hat{d}$ and clipping size $C$
        \STATE{Generate random matrix $\mathbf{M}\in\Real^{\hat{d}\times d}$ as (\ref{eq:randomMatrix})}
        \STATE{$\mathbf{u}\leftarrow \mathbf{M} \mathbf{g}$}
        \FOR{$i\in[\hat{d}]$}
            \STATE{
                $\bar{u}_i \leftarrow$ clip $u_i$ as (\ref{eq:clip_prs})
            }
            \STATE{
                $\tilde{u}_i \leftarrow$ randomize $\bar{u}_i$ as (\ref{eq:bitflip})
            }
        \ENDFOR
        \STATE{\bfseries Output:} $\mathbf{M}^\top \tilde{\mathbf{u}}$
    \end{algorithmic}
\end{algorithm}

\subsubsection{Acceleration of Learning Efficiency}

Since the Laplace mechanism is simple, but decreases the accuracy of gradient significantly, the learning efficiency of the whole DRL might be decreased.
We introduce an alternative randomizing technique, projected random sign (PRS), to have opportunity to increase the learning efficiency. 
To increase the learning effieicy, the PRS mechanism addresses reducing dimensionality and sharpning gradient direction while injecting randomness for LDP.

First, the PRS applies the random projection to reduce dimensionality.
Each agent maps $d$ dimensional stochastic gradient vector $\mathbf{g}$ to $\hat{d} (< d)$ dimensional vector $\mathbf{u}$ with random matrix $\mathbf{M}$ as $\mathbf{u}=\mathbf{M}\mathbf{g}$.
$\mathbf{M}$ follows (\ref{eq:randomMatrix}).

Second, before applying randomization, each agent applies \textit{element-wise clipping} denoted as follows:
\begin{equation}
    \bar{u}_i = 
        \begin{cases}
            -C & \text{if } u_i < -C,\\
            +C & \text{if } u_i > +C,\\
            u_i & \text{otherwise}
        \end{cases}
    \label{eq:clip_prs}
\end{equation}
where $u_i$ is the $i$-th dimensional value of $\mathbf{u}$.
This element-wise clipping bounds the sensitivity by $C$.

Third, we apply bit flipping for the clipped vector produced by (\ref{eq:clip_prs}).
The bit flipping extending (\ref{eq:bf}) is denote as follows:
\begin{equation}
    \tilde{u}_i =
        \begin{cases}
            +C & \text{w/ probability } \frac{1}{e^{\varepsilon/k}+1} + \frac{\bar{\mathbf{u}}+C}{2C}\frac{e^{\varepsilon/\hat{d}}-1}{e^{\varepsilon/\hat{d}}+1},\\
            -C & \text{otherwise}.
        \end{cases}
    \label{eq:bitflip}
\end{equation}
The randomization by (\ref{eq:bitflip}) consumes privacy parameter $\varepsilon/\hat{d}$ for each dimension.
At last, this mechanism inversely transforms $\hat{\mathbf{u}}$ as $\mathbf{M}^{\top}\hat{\mathbf{u}}$, and report it to the central aggregator.

\subsection{Privacy Analysis}

\begin{lemma}\label{thm:prima}
    An algorithm that follows the \method framework satisfies $\varepsilon$-LDP for all local agents.
\end{lemma}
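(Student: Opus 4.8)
The plan is to verify the three defining properties of the \method framework one at a time and then invoke the composition and post-processing theorems stated above. First I would show that the single gradient report of each local agent $n$ satisfies $\varepsilon$-LDP in the sense of the gradient-submission definition. This amounts to checking that the randomized mechanism $\mathcal{Q}$ used in Step~3 (either the Laplace mechanism of Algorithm~\ref{alg:laplace} or the PRS mechanism of Algorithm~\ref{alg:bit_flip}) is $\varepsilon$-differentially private with respect to the clipped gradient. For the Laplace case, the key calculation is that the $\ell_1$-clipping in (\ref{eq:clip}) forces the sensitivity bound (\ref{eq:bound_sen}), $\|\bar{\mathbf{g}}-\bar{\mathbf{g}}'\|_1\le C$, so the Laplace noise in (\ref{eq:lapnoise}) with scale $C/\varepsilon$ yields the ratio bound $e^\varepsilon$ by the standard Laplace-mechanism argument. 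For the PRS case, the element-wise clipping (\ref{eq:clip_prs}) bounds each coordinate of $\mathbf{u}=\mathbf{M}\mathbf{g}$ to $[-C,C]$, each bit-flip step (\ref{eq:bitflip}) spends budget $\varepsilon/\hat d$, and the $\hat d$ coordinates composed through Theorem~\ref{thm:seqcomp} spend $\varepsilon$ in total; the final multiplication by $\mathbf{M}^\top$ is deterministic post-processing and is covered by the post-processing theorem.

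Second, I would argue that conditioning on the previously reported noisy gradients $\tilde{\mathbf{g}}_1,\dots,\tilde{\mathbf{g}}_{n-1}$ does not affect the guarantee for agent $n$. Here the essential observation is that agent $n$'s randomness is drawn independently of the other agents, and the global parameter $\theta$ that agent $n$ receives in Step~1 is a function only of those earlier noisy reports (via (\ref{eq:update_param})), hence is itself post-processing of already-private quantities; so the conditional distribution of $\tilde{\mathbf g}_n$ given $\psi_n$ and $\tilde{\mathbf g}_1,\dots,\tilde{\mathbf g}_{n-1}$ still differs by at most a factor $e^\varepsilon$ between any $\psi,\psi'\in\Psi$. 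This is exactly the inequality in the definition of $\varepsilon$-LDP for gradient submissions.

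Third, I would handle the aggregator's update step. Since by assumption each agent submits only once, and the aggregator in Step~4 touches local information only through the collected noisy gradients — the buffer average (\ref{eq:update_param}) and any subsequent redistribution of $\theta$ — every later quantity is a (possibly randomized) function of $(\tilde{\mathbf g}_1,\dots,\tilde{\mathbf g}_N)$. Applying the post-processing invariance theorem, nothing the aggregator does degrades the $\varepsilon$-LDP guarantee already established for each agent. Collecting the three parts gives $\varepsilon$-LDP for all local agents, which is the claim.

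The main obstacle I anticipate is the second step: making the conditioning on $\tilde{\mathbf g}_1,\dots,\tilde{\mathbf g}_{n-1}$ fully rigorous, because the framework is asynchronous and the model $\theta$ agent $n$ trains on is correlated with earlier reports. One has to be careful that this correlation enters agent $n$'s mechanism only through the (public, already-privatized) parameter and not through any fresh function of $\psi_n$ itself, and that the sensitivity bound $C$ on the clipped gradient holds uniformly over all such $\theta$ — which it does, since clipping enforces $\|\bar{\mathbf g}\|_1\le C/2$ regardless of the input gradient. The Laplace/PRS per-agent arguments and the post-processing step are comparatively routine once the sensitivity bounds (\ref{eq:bound_sen}) and the coordinate bound from (\ref{eq:clip_prs}) are in hand.
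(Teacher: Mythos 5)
Your proposal is correct and follows essentially the same route as the paper: per-agent randomization of the clipped gradient gives the $e^\varepsilon$ ratio bound, the dependence on earlier reports enters only through the already-privatized parameter $\theta$, and everything the aggregator does is post-processing. The paper merely splits this into a high-level sketch (Lemma~\ref{thm:prima}) plus separate mechanism-specific lemmas (Lemmas~\ref{thm:laplace}, \ref{thm:prs}, \ref{thm:update}), whereas you fold all of that content into one argument; your explicit treatment of the conditioning on $\tilde{\mathbf{g}}_1,\dots,\tilde{\mathbf{g}}_{n-1}$ is in fact slightly more careful than the paper's.
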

\begin{proofsketch}
    In step 3 of the \method framework, each agent reports a noisy gradient that ensures $\varepsilon$-LDP.
    In step 4, the central aggregator updates $\theta$ only utilizing the received noisy gradients from the local agents.
    This step is independent of any information about local agents.
    Therefore, steps 4 does not violate $\varepsilon$-LDP due to post-processing invariance.
    Move forward to the next round. The central aggregator delivers the updated parameter $\theta$ to the other agent at step 1.
    At step 2, different agent copies the parameter as $\theta'=\theta$ and updates $\theta'$ through her local environment.
    Since the learning process at a local agent is independent of all other agents, the output $\theta'$ also does not violate $\varepsilon$-LDP for all other agents.
\end{proofsketch}

\begin{lemma}\label{thm:laplace}
    Gradient submission with the Laplace mechanism satisfies $\varepsilon$-LDP.
\end{lemma}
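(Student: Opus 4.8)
The plan is to reduce the claim to the classical analysis of the Laplace mechanism, with the clipping step (\ref{eq:clip}) supplying the $\ell_1$-sensitivity bound (\ref{eq:bound_sen}). First I would fix an agent $n$ and condition on the earlier reports $\tilde{\mathbf{g}}_1,\dots,\tilde{\mathbf{g}}_{n-1}$; this fixes the global parameter $\theta$ that agent $n$ copies into $\theta'$, hence fixes the (stochastic) map from the local history $\his_n$ to the clipped gradient $\bar{\mathbf{g}}_n$. The decisive structural fact is that the Laplace noise $\mathbf{z}$ generated in Algorithm~\ref{alg:laplace} is drawn fresh, independently of $\psi_n$, of $\his_n$, and of $\tilde{\mathbf{g}}_1,\dots,\tilde{\mathbf{g}}_{n-1}$.

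Next I would prove a uniform pointwise likelihood-ratio bound for the randomizer restricted to clipped inputs. For any $\bar{\mathbf{g}},\bar{\mathbf{g}}'\in\Real^d$, independence of the coordinates $z_i$ lets the output density of $\bar{\mathbf{g}}+\mathbf{z}$ at a point $\mathbf{y}$ factorize, so by the reverse triangle inequality applied coordinatewise,
\begin{equation}
\prod_{i=1}^{d}\exp\!\left(-\frac{\varepsilon}{C}\bigl(|y_i-\bar g_i|-|y_i-\bar g'_i|\bigr)\right)
\ \leq\ \exp\!\left(\frac{\varepsilon}{C}\|\bar{\mathbf{g}}-\bar{\mathbf{g}}'\|_1\right)\ \leq\ e^{\varepsilon},
\end{equation}
where the last step uses (\ref{eq:bound_sen}). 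Integrating over an arbitrary measurable $G\subseteq\Real^d$ gives $\Pr(\bar{\mathbf{g}}+\mathbf{z}\in G)\leq e^{\varepsilon}\Pr(\bar{\mathbf{g}}'+\mathbf{z}\in G)$ for \emph{every} pair of clipped gradients; in particular the function $p_G(\bar{\mathbf{g}}):=\Pr(\bar{\mathbf{g}}+\mathbf{z}\in G)$ takes values in some interval $[m,M]$ with $M\leq e^{\varepsilon}m$.

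It then remains to pass from fixed inputs to the stochastic gradient whose law depends on $\psi_n$. Writing the reported output as $\tilde{\mathbf{g}}_n=\bar{\mathbf{g}}_n+\mathbf{z}$ and integrating out the local randomness in $\his_n$,
\begin{equation}
\Pr\!\left(\tilde{\mathbf{g}}_n\in G\,\middle|\,\psi_n=\psi,\tilde{\mathbf{g}}_1,\dots,\tilde{\mathbf{g}}_{n-1}\right)
=\mathbb{E}\!\left[\,p_G(\bar{\mathbf{g}}_n)\,\middle|\,\psi_n=\psi,\dots\right]\in[m,M],
\end{equation}
and the same holds with $\psi'$ in place of $\psi$; since both conditional probabilities lie in $[m,M]$ with $M\leq e^{\varepsilon}m$, their ratio is at most $e^{\varepsilon}$, which is precisely the $\varepsilon$-LDP condition for gradient submissions.

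I expect the last step to be the only subtle point: one must check that indistinguishability survives both the averaging over the gradient's intrinsic randomness and the conditioning on the history of prior reports. The resolution is that the Laplace likelihood-ratio bound above is uniform over all clipped input pairs and the noise $\mathbf{z}$ is independent of the past, so no control over the ($\psi_n$-dependent) distribution of $\his_n$ is needed. Everything else — the coordinatewise Laplace computation and the invocation of the sensitivity bound (\ref{eq:bound_sen}) — is routine.
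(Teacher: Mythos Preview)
Your proposal is correct and follows essentially the same route as the paper: condition on the prior reports (equivalently, on the delivered parameter $\theta'$), invoke the standard Laplace-mechanism likelihood-ratio bound for any pair of clipped gradients using the sensitivity bound (\ref{eq:bound_sen}), and then lift to the $\psi$-level inequality. The paper's own proof states both steps tersely without computation; your coordinatewise Laplace calculation and the explicit $[m,M]$ averaging argument simply spell out what the paper leaves implicit.
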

\begin{proof}
    Each agent is given $\theta'$ which contains the information of $\{\tilde{\mathbf{g}}_{n^*}\}_{n^*\in \mathcal{N}}$ where $\mathcal{N}\subset [n-1]$.
    With $\theta'$, agent $n$ computes gradient $\mathbf{g}_n$ and outputs $\tilde{\mathbf{g}}_n$.
    With the clipping and Laplace mechanism, for any $\mathbf{g}_n$ and $\mathbf{g}_n'$, the following inequality holds.
    \begin{align*}
        \frac{\mathbb{P}(\tilde{\mathbf{g}}_n|\mathbf{g}_n)}{\mathbb{P}(\tilde{\mathbf{g}}_n|\mathbf{g}_n')}\leq e^\varepsilon.
    \end{align*}
    Since the inequality holds regardless $\theta'$,  the following also holds.
    For any $\psi_n, \psi_n'\in\Psi$,
    \begin{align*}
        \frac{\mathbb{P}(\tilde{\mathbf{g}}_n|\psi_n, \tilde{\mathbf{g}}_1,\hdots,\tilde{\mathbf{g}}_{n-1})}{\mathbb{P}(\tilde{\mathbf{g}}_n|\psi_n', \tilde{\mathbf{g}}_1,\hdots,\tilde{\mathbf{g}}_{n-1})}\leq e^\varepsilon.
    \end{align*}
\end{proof}

\begin{lemma}\label{thm:prs}
    Gradient submission with the PRS mechanism satisfies $\varepsilon$-LDP.
\end{lemma}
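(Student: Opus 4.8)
The plan is to walk through the pipeline of Algorithm~\ref{alg:bit_flip} stage by stage: condition on the random matrix $\mathbf{M}$, establish a per-coordinate privacy guarantee for the bit flip~(\ref{eq:bitflip}), compose across the $\hat{d}$ coordinates via sequential composition, and then close with post-processing invariance and the same conditioning argument used in the proof of Lemma~\ref{thm:laplace}.

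First I would fix an arbitrary realization of $\mathbf{M}$ drawn according to~(\ref{eq:randomMatrix}). Since $\mathbf{M}$ is generated independently of the agent's gradient $\mathbf{g}_n$ and of $\psi_n$, it suffices to prove the $e^\varepsilon$ ratio bound for every fixed $\mathbf{M}$ and then average over $\mathbf{M}$, whose law does not depend on the data. Given $\mathbf{M}$, the projection $\mathbf{g}\mapsto\mathbf{u}=\mathbf{M}\mathbf{g}$ and the element-wise clipping $\mathbf{u}\mapsto\bar{\mathbf{u}}$ of~(\ref{eq:clip_prs}) are deterministic, and the clipping guarantees $\bar{u}_i\in[-C,+C]$ for every coordinate and every input gradient.

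Next I would show each coordinate's bit flip~(\ref{eq:bitflip}) satisfies $(\varepsilon/\hat{d})$-LDP. Writing $p(\bar u)=\Pr(\tilde{u}_i=+C\mid\bar{u}_i=\bar u)$, formula~(\ref{eq:bitflip}) is affine in $\bar u$ (here reading $e^{\varepsilon/k}$ as $e^{\varepsilon/\hat{d}}$), so on $[-C,+C]$ it attains its extremes at the endpoints: $p(+C)=\frac{e^{\varepsilon/\hat{d}}}{e^{\varepsilon/\hat{d}}+1}$ and $p(-C)=\frac{1}{e^{\varepsilon/\hat{d}}+1}$. Hence for any $\bar u,\bar u'\in[-C,+C]$ both $p(\bar u)/p(\bar u')$ and $(1-p(\bar u))/(1-p(\bar u'))$ lie in $[e^{-\varepsilon/\hat{d}},e^{\varepsilon/\hat{d}}]$, which is exactly $(\varepsilon/\hat{d})$-LDP for the one-dimensional map $\bar{u}_i\mapsto\tilde{u}_i$, and therefore, composing with the fixed deterministic preprocessing, for the map $\mathbf{g}\mapsto\tilde{u}_i$. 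Since the $\hat{d}$ coordinates are randomized with independent noise, the joint release $\mathbf{g}\mapsto\tilde{\mathbf{u}}$ is a composition of $\hat{d}$ mechanisms each $(\varepsilon/\hat{d})$-LDP, so by Theorem~\ref{thm:seqcomp} it satisfies $\hat{d}\cdot(\varepsilon/\hat{d})=\varepsilon$-LDP. The reported vector $\mathbf{M}^\top\tilde{\mathbf{u}}$ is a deterministic function of $\tilde{\mathbf{u}}$ given $\mathbf{M}$, so by post-processing invariance the conditional law of $\mathbf{M}^\top\tilde{\mathbf{u}}$ given $\mathbf{g}$ and $\mathbf{M}$ still obeys the $e^\varepsilon$ ratio bound; averaging over $\mathbf{M}$ preserves it. Finally, exactly as in Lemma~\ref{thm:laplace}, since $\Pr(\tilde{\mathbf{g}}_n\in G\mid\mathbf{g}_n)\le e^\varepsilon\Pr(\tilde{\mathbf{g}}_n\in G\mid\mathbf{g}_n')$ holds for every pair of gradients irrespective of the incoming $\theta'$ (hence irrespective of $\tilde{\mathbf{g}}_1,\dots,\tilde{\mathbf{g}}_{n-1}$), it yields the conditional form required by the definition of $\varepsilon$-LDP for gradient submissions.

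The step I expect to be the main obstacle is the per-coordinate analysis of~(\ref{eq:bitflip}): one must use the element-wise clipping crucially to pin $p(\bar u)$ into $\bigl[\tfrac{1}{e^{\varepsilon/\hat{d}}+1},\tfrac{e^{\varepsilon/\hat{d}}}{e^{\varepsilon/\hat{d}}+1}\bigr]$ and then check the ratio is controlled by $e^{\varepsilon/\hat{d}}$ for \emph{both} outcomes $\pm C$, and to get the bookkeeping of the privacy budget ($\varepsilon/\hat{d}$ per coordinate, $\hat{d}$ coordinates) consistent with~(\ref{eq:bitflip}). Everything downstream — sequential composition, post-processing through $\mathbf{M}^\top$, removing the conditioning on $\mathbf{M}$, and lifting to the conditional definition — is routine given the earlier results.
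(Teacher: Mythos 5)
Your proof is correct and follows essentially the same route as the paper's: a per-coordinate $(\varepsilon/\hat{d})$-LDP bound for the bit flip, composition over the $\hat{d}$ coordinates, post-processing through $\mathbf{M}^\top$, and the same conditioning argument as in Lemma~\ref{thm:laplace} to reach the formal definition. You additionally verify the endpoint probabilities $p(\pm C)$ of~(\ref{eq:bitflip}) and treat the randomness of $\mathbf{M}$ explicitly by conditioning and averaging, both of which the paper leaves implicit.
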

\begin{proof}
    For any $\mathbf{g}',\mathbf{g}''\in\Real^d$, any $\mathcal{Y}\subset \Real^d$ and any $\mathcal{Y}'\subset \{-C, C\}^{\hat{d}}$ such that $\mathcal{Y}=\{\mathbf{M}^\top \mathbf{y}'\in\Real^d | \mathbf{y}'\in\mathcal{Y}'\}$, 
    \begin{align*}
        \frac{\Pr(\mathbf{M}^\top \tilde{\mathbf{u}}\in\mathcal{Y}|\mathbf{g}=\mathbf{g}')}{\Pr(\mathbf{M}^\top \tilde{\mathbf{u}}\in\mathcal{Y}|\mathbf{g}=\mathbf{g}'')}
        \leq & \frac{\Pr(\tilde{\mathbf{u}}\in\mathcal{Y}'|\mathbf{g}=\mathbf{g}')}{\Pr(\tilde{\mathbf{u}}\in\mathcal{Y}'|\mathbf{g}=\mathbf{g}'')}\\
        \leq & \max_{\mathbf{y}'\in\mathcal{Y}'}\prod_{i\in[\hat{d}]}
        \frac{\Pr(\tilde{u}_i=y_i'|\mathbf{g}=\mathbf{g}')}{\Pr(\tilde{u}_i=y_i'|\mathbf{g}=\mathbf{g}'')}\\
        \leq & \prod_{i\in[\hat{d}]}e^{\varepsilon/\hat{d}} = e^{\varepsilon}.
    \end{align*}
    The following expansion is as well as Proof of Lemma \ref{thm:laplace}.
\end{proof}

\begin{lemma}\label{thm:update}
    Updating paramters on the central aggregator (\ref{eq:update_param}) does not violate $\varepsilon$-LDP that has been satisfied for each local agent.
\end{lemma}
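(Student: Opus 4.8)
The plan is to recognise that the update rule (\ref{eq:update_param}) performed by the central aggregator is nothing more than post-processing of the collection of noisy gradients that Lemma \ref{thm:laplace} (respectively Lemma \ref{thm:prs}) has already certified to be $\varepsilon$-LDP, and then to invoke the post-processing invariance theorem. The only work is to make the conditioning precise so that, from the point of view of a single fixed agent, every other quantity entering the update is independent of that agent's private parameter.

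Concretely, I would fix an arbitrary local agent $n$ and two candidate private parameters $\psi_n=\psi$ and $\psi_n=\psi'$, together with an arbitrary conditioning set $\tilde{\mathbf{g}}_1,\dots,\tilde{\mathbf{g}}_{n-1}$ as in the definition of $\varepsilon$-LDP for gradient submissions. Let $B$ be the buffer contents at the flush that consumes $\tilde{\mathbf{g}}_n$; if instead $\tilde{\mathbf{g}}_n$ belonged to an earlier flushed buffer the argument is the same, since then the global parameter already incorporates $\tilde{\mathbf{g}}_n$ and all later updates only add $\psi_n$-independent terms. Writing the post-flush parameter as
\begin{equation}
    \theta^{\text{new}} = \theta - \frac{\eta}{|B|}\,\tilde{\mathbf{g}}_n - \frac{\eta}{|B|}\sum_{m\,:\,\tilde{\mathbf{g}}_m\in B,\; m\neq n}\tilde{\mathbf{g}}_m ,
\end{equation}
we observe that $\theta$ is the global parameter that was delivered to agent $n$ before its episode and is therefore a function of earlier reports only, hence independent of $\psi_n$; that $\eta$ and $|B|$ are constants; and that each $\tilde{\mathbf{g}}_m$ with $m\neq n$ was produced by a different agent in an environment whose dynamics are independent of $\psi_n$. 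Thus the map $\tilde{\mathbf{g}}_n\mapsto\theta^{\text{new}}$ is a (possibly randomized, because of the other $\tilde{\mathbf{g}}_m$) function that does not otherwise depend on $\psi_n$.

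Applying the post-processing invariance theorem to this map, and using that $\tilde{\mathbf{g}}_n$ satisfies $\varepsilon$-LDP with respect to $\psi_n$, I would conclude
\begin{equation}
    \frac{\Pr(\theta^{\text{new}}\in\Theta \mid \psi_n=\psi,\ \tilde{\mathbf{g}}_1,\dots,\tilde{\mathbf{g}}_{n-1})}{\Pr(\theta^{\text{new}}\in\Theta \mid \psi_n=\psi',\ \tilde{\mathbf{g}}_1,\dots,\tilde{\mathbf{g}}_{n-1})}\leq e^{\varepsilon}
\end{equation}
for every measurable $\Theta\subseteq\Real^d$. Since every subsequent global parameter is obtained from $\theta^{\text{new}}$ by the same kind of update — again post-processing that mixes in only $\psi_n$-independent reports — a short induction over the sequence of flushes propagates this bound to every parameter the aggregator ever holds or redistributes; as $n$ was arbitrary, the lemma follows.

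The main obstacle I anticipate is not mathematical but bookkeeping: one must argue carefully that the other agents' reports appearing in the same buffer are independent of $\psi_n$ given the conditioning set $\{\tilde{\mathbf{g}}_1,\dots,\tilde{\mathbf{g}}_{n-1}\}$ — which holds because each such report was generated, before $\theta^{\text{new}}$, from a global parameter already fixed by the conditioning and using only that agent's own independent environment — and that the notion of post-processing invoked here genuinely covers randomized maps, so that the presence of those additional random reports does not weaken the guarantee. Once this is set up cleanly, the remainder is a direct appeal to post-processing invariance.
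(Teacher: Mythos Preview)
Your proposal is correct and follows the same approach as the paper: both arguments reduce the update (\ref{eq:update_param}) to post-processing of the already $\varepsilon$-LDP noisy gradients and invoke the post-processing invariance theorem. The paper's proof is a two-sentence version of what you wrote, simply noting that the update uses only the received noisy gradients in $B$ and therefore does not violate $\varepsilon$-LDP by post-processing invariance; your careful conditioning and induction over flushes add rigor but no new ideas.
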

\begin{proof}
    The parameter update (\ref{eq:update_param}) is only utilizing received noisy gradient in the batch $B$, which means independent from any information about any local agents except noisy gradients.
    Due to post-processing invariance, the parameter update does not violate $\varepsilon$-LDP that has been satisfied at each local agent.
\end{proof}

\begin{theorem}
   \method-A3C (Algorithm \ref{alg1}) satisfies $\varepsilon$-LDP for all local agents.
\end{theorem}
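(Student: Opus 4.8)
The plan is to assemble the final theorem directly from the four lemmas already proved, so the proof is essentially a bookkeeping argument over the loop structure of Algorithm \ref{alg1}. First I would observe that the concrete mechanism $\mathcal{Q}$ used in line \ref{line:randomize} is instantiated either by the Laplace mechanism (Algorithm \ref{alg:laplace}) or by the PRS mechanism (Algorithm \ref{alg:bit_flip}); by Lemma \ref{thm:laplace} and Lemma \ref{thm:prs} respectively, in either case the single gradient submission $\tilde{\mathbf{g}}_n = \mathcal{Q}(\mathbf{g}_n;\varepsilon,d,\hat{d},C)$ satisfies $\varepsilon$-LDP in the sense of Definition (\varepsilon-LDP for gradient submissions), i.e. conditionally on all previously submitted noisy gradients $\tilde{\mathbf{g}}_1,\dots,\tilde{\mathbf{g}}_{n-1}$.

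Next I would check that \method-A3C is a faithful instance of the abstract \method framework, so that Lemma \ref{thm:prima} applies. Concretely: line \ref{line:copy} is step 1/2 (copy $\theta'\leftarrow\theta$); the \texttt{while} loop plus line \ref{line:grad} (computing $\mathbf{g}_n$ from the empirical loss $\hat{L}_n(\theta';\his_n,\lambda)$ built only from agent $n$'s own history $\his_n$) is the local update of step 2, which touches no other agent's private information; line \ref{line:randomize} is step 3; and the buffered update (\ref{eq:update_param}), covered separately by Lemma \ref{thm:update}, is step 4 — it is post-processing of the noisy gradients in $B$ and therefore, by post-processing invariance (Theorem 3), does not degrade the $\varepsilon$-LDP guarantee already held for each contributing agent. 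I would then note that when the updated $\theta$ is redelivered, a fresh agent's local computation is again independent of other agents' raw environments, so no additional privacy is spent across rounds.

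The one genuinely delicate point — and the step I expect to be the main obstacle — is the "one submission per agent" accounting versus sequential composition (Theorem \ref{thm:seqcomp}). Because the algorithm assumes each agent submits exactly once (the \texttt{REPEAT} loop terminates after $N$ submissions, one per agent), the privacy claim for agent $n$ is about the single release $\tilde{\mathbf{g}}_n$, conditioned on the others; the conditioning is exactly what Lemmas \ref{thm:laplace} and \ref{thm:prs} already handle, so sequential composition over an agent is not needed and the per-agent budget stays at $\varepsilon$ rather than accumulating. I would state this explicitly to forestall the natural worry that repeated rounds or reuse of a buffered gradient "multi-time" (as the text permits) inflates $\varepsilon$: reuse and buffering are all post-processing of already-private quantities and, crucially, never re-query agent $n$'s environment, so Theorem 3 keeps the bound tight.

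Putting it together: for each $n\in[N]$ and any $\psi,\psi'\in\Psi$ and measurable $G\subset\Real^d$, Lemma \ref{thm:laplace} (resp. Lemma \ref{thm:prs}) gives
\begin{equation*}
  \frac{\Pr(\tilde{\mathbf{g}}_n\in G\mid\psi_n=\psi,\tilde{\mathbf{g}}_1,\dots,\tilde{\mathbf{g}}_{n-1})}{\Pr(\tilde{\mathbf{g}}_n\in G\mid\psi_n=\psi',\tilde{\mathbf{g}}_1,\dots,\tilde{\mathbf{g}}_{n-1})}\leq e^{\varepsilon},
\end{equation*}
and every subsequent operation the aggregator performs on $\{\tilde{\mathbf{g}}_n\}$ — buffering, averaging, the gradient step (\ref{eq:update_param}), redistribution, and any downstream local learning by other agents (Lemma \ref{thm:update}, Lemma \ref{thm:prima}) — is post-processing that touches no further private data of agent $n$. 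Hence \method-A3C satisfies $\varepsilon$-LDP for every local agent, which is the claim.
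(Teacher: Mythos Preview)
Your proposal is correct and follows exactly the same approach as the paper's own proof, which simply invokes Lemmas \ref{thm:prima}, \ref{thm:laplace}, \ref{thm:prs}, and \ref{thm:update} in a single sentence. You have merely spelled out the bookkeeping (matching lines of Algorithm \ref{alg1} to the four \method steps, and noting the single-submission-per-agent assumption so that sequential composition is not invoked) that the paper leaves implicit.
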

\begin{proof}
    From Lemma \ref{thm:prima}, \ref{thm:laplace}, \ref{thm:prs} and \ref{thm:update}, \method-A3C obviously satisfies $\varepsilon$-LDP for all local agents.
\end{proof}

\subsubsection{Extending to multiple submissions}
We can easily extend the algorithm to a locally multi-round algorithm.
In the multi-round algorithm, each agent submits a randomized stochastic gradients $T$-times.
For each submission, the agent consumes a privacy budget $\varepsilon/T$, and the whole consumed budget is $\varepsilon$ because of the sequential composition theorem (Theorem \ref{thm:seqcomp}).

\section{Experiments}

We here demonstrate the effectiveness of our proposals.
We evaluate \textbf{learning efficiency}, \textbf{success ratio}, and  \textbf{trade-off between privacy and efficiency}.
Before showing empirical results, we describe what the evaluation task is and how to implement \method-A3C.

\textbf{Evaluation Task.} 
We make some numerical observations on \textit{cart pole} with different gravity acceleration coefficients $\psi_n \in \{9.7, 9.8, 9.9\}$.
Suppose that each coefficient is appeared uniformly.
Cart Pole~\cite{Barto1983cart} is the classical reinforcement learning task that an agent controls a cart with a pole to keep the pole standing.
The number of time steps in which the pole is standing is the cumulative reward.
The cumulative reward is called \textit{score} in this section, and the maximum score is 200.
$\State$ consists of cart position $\in[-4.8, 4.8]$, cart velocity $\in[-\infty,  \infty]$, pole angle $\in[-24 \text{deg}, 24 \text{deg}]$ and pole velocity at tip $\in[-\infty, \infty]$.
The cart moves on a one-dimensional line. 
At each time step, the agent selects an action from $\action = \{\text{push left}, \text{push right}\}$.

\textbf{Stopping Criteria.}
We hire a way to iterate the learning process until the central aggregator received the predefined number of submissions from agents.
Since we assume that each agent submits only once, the number of submissions is identical to the number of agents.
We assume the scores are not private information.
If we need to protect the score with LDP, we can easily develop additional  submissions that agents send a noisy boolean representing whether the score is larger than a threshold.

\textbf{Implementation.}
To implement the proposed algorithms, we use two shallow neural networks corresponding to $\pi(\cdot|\cdot;\theta^{(c)},\theta^{(p)})$ and $V(\cdot;\theta^{(c)},\theta^{(v)})$, respectively.
Each network has two layers, and the activation functions are ReLU.
$\theta^{(c)} \in \Real^{16\times 4}$, $\theta^{(p)} \in \Real^{2\times 16}$, $\theta^{(v)}\in\Real^{1\times 16}$ and $d=112$.
Given a state $s\in\State$, $\pi(\cdot|s;\theta)$ outputs the confidence in each action.
Each agent takes an action having $\arg \max_{a \in \action}\pi(\cdot|s;\theta)$ with probability $1-\alpha$.
With probability $\alpha$, the agent takes a randomly selected action.
$\alpha$ is decreased from $0.5$ to $0$ as $\alpha=\max\{0, 0.5-n/1800\}$.
Our implementation utilizes 9 threads for asynchronous agent processes.
Empirical codes are developed by Python 3.7.4, TensorFlow 1.14.0~\cite{tensorflow2015-whitepaper} and OpenAIGym 0.14.0~\cite{openAiGym}.

\textbf{Hyper Parameters.}
We set discounting factor $\gamma = 0.99$, learning rate $\eta=0.5$, loss scaling factors $\lambda=0.5$ and $\beta=0.01$.
$C$ is set $0.01$ for the Laplace mechanism and is set $1$ for the PRS.
For the PRS, we set $\hat{d}=\max\{1, \min\{d, \lfloor \varepsilon/2.5\rfloor\}\}$ as following \cite{Wang2019collecting}.

\textbf{Score.}
Each local agent measures a score, which is how long time steps the pole keeps the standings.
We observe how the learning progress reaches the target score ($\Theta=195$).
Especially to evaluate robustness across various environments, we measure the average score.
The average score at $n$-th submission over last $m$ (=10) submissions is:
\begin{equation}
    \mu_n = \frac{1}{m} \sum_{n' = n}^{n+m-1} \zeta_{n'}
    \label{eq:avgscore}
\end{equation}
where $\zeta_{n'}$ is the score at $n'$.

\begin{figure}[t!]
    \centering
    \subfloat[Laplace ($\varepsilon=1, |B|=1$)\label{fig:lap_eps1_bs1}]{
        \includegraphics[width=0.47\hsize]{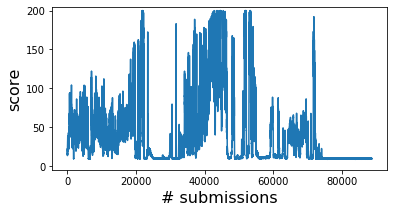}
    }
    \subfloat[Laplace ($\varepsilon=1, |B|=100$)\label{fig:lap_eps1_bs100}]{
        \includegraphics[width=0.47\hsize]{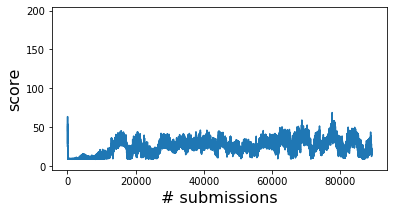}
    }\\
    \subfloat[PRS ($\varepsilon=1, |B|=1$)\label{fig:bf_eps1_bs1}]{
        \includegraphics[width=0.47\hsize]{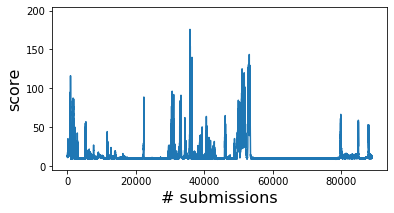}
    }
    \subfloat[PRS ($\varepsilon=1, |B|=100$)\label{fig:bf_eps1_bs100}]{
        \includegraphics[width=0.47\hsize]{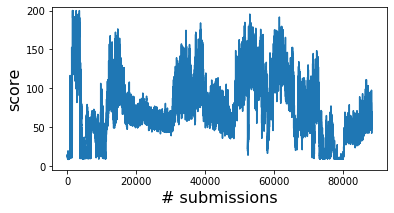}
    }
    \caption{The average scores during the training process.
    Buffering helps the PRS to train the policy, but disturbs the training of Laplace mechanism.}
    \label{fig:eval}
\end{figure}

\subsection{Observation of Learning Behaviors}

First, we observe the learning behaviors of our proposed methods to decide several hyper-parameters' values.
We compare two mechanisms with and without buffering, whose buffer size is $|B|=100$.
We regard training as a success if the average score meets $\Theta$.

Figure \ref{fig:eval} shows the average scores of our proposed method employing two different mechanisms during the training process with $\varepsilon=1$ and $|B|=\{1, 100\}$.
Without buffering, scores for all settings change drastically, but the buffering limits the learning dynamics of the Laplace mechanism too small to learn.
However, in the PRS mechanism, the buffering gives better learning stability than without buffering.
Thus, the buffering helps the PRS mechanism to train the policy well, but it disturbs the training with the Laplace mechanism.

In the later part of the evaluations, we employ the Laplace mechanism with $|B|=1$, and the PRS with $|B|=100$.

\subsection{Learning Efficiency}

We evaluate how early the algorithms achieve a target score.
To measure it, we define a metric \textit{first success time (FST)}:
\begin{equation}
    FST = \min \{n | n\in [N], \mu_n \geq \Theta \}.
    \label{eq:fst}
\end{equation}
We regard $FST$ as $\infty$ if a learning process cannot meet the success in 90,000 updates.

Table \ref{tbl:updates} shows the median of the $FST$ in $20$ trials for each setting.
The smaller median value suggests that a method is efficient to learn.
We measure the FST varying $\varepsilon=1, 2, 5$, and $10$.
The result of the Laplace mechanism shows a decreasing the median of $FST$ along with increasing $\varepsilon$.
While \method-A3C with PRS shows better results within $\varepsilon=\{2,5\}$, but it does not show such improvement at $\varepsilon=10$.
Therefore, \method-A3C with the PRS has a chance to increase learning efficiency than Laplace.

\begin{table}[t]
\centering
\caption{\#submissions that meets target score at first.}
\label{tbl:updates}
\begin{tabular}{lrrrrc}
    \toprule
    \multicolumn{1}{c}{} & \multicolumn{5}{c}{median of $FST$ (\ref{eq:fst})} \\
    $\mathcal{Q}$  & $\varepsilon=1$  & $\varepsilon=2$ & $\varepsilon=5$ & $\varepsilon=10$ & $\varepsilon=\infty$ \\
    \cmidrule(lr){1-1} \cmidrule(lr){2-6}
    Lap  & 18377.0 & 20238.5 & 5714.5 & 4055.0 &  \multirow{2}{*}{1769.0}\\ 
    PRS     & 25226.5 & 7549.0   & 2656.5 & 11217.5 &     \\
    \bottomrule
\end{tabular}
\end{table}

\begin{table}[t]
\centering
\caption{Success ratio.}
\label{tbl:success_ratio}
\begin{tabular}{lrrrrc}
    \toprule
    \multicolumn{1}{c}{} & \multicolumn{5}{c}{success ratio} \\
    $\mathcal{Q}$  & $\varepsilon=1$  & $\varepsilon=2$ & $\varepsilon=5$ & $\varepsilon=10$ & $\varepsilon=\infty$ \\
    \cmidrule(lr){1-1} \cmidrule(lr){2-6}
    Lap  & 0.80  & 0.90  & 1.00 & 1.00 & \multirow{2}{*}{1.0} \\ 
    PRS & 0.85  & 0.95  & 0.90 & 0.90 &   \\
    \bottomrule
\end{tabular}
\end{table}

\begin{figure}[t]
    \centering
    \subfloat[Laplace mechanism]{
        \includegraphics[width=.75\hsize]{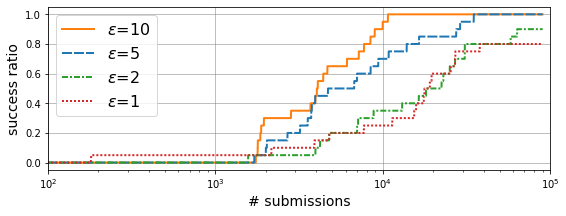}
        \label{fig:success_lap}
    }\\
    \subfloat[PRS mechanism]{
        \includegraphics[width=.75\hsize]{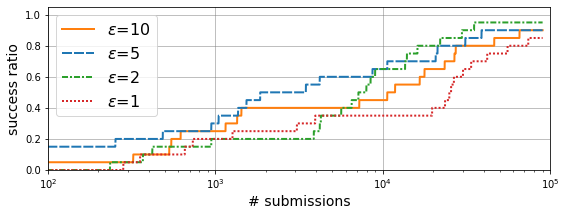}
        \label{fig:success_bf}
    }
    \caption{Success ratio for each $\varepsilon$ at \#submissions. \method-A3C with the PRS mechanism makes more successes in earlier stage than the method with the Laplace.}
    \label{fig:success}
\end{figure}


\begin{table}[t]
\centering
\caption{Relative AUC of Figure \ref{fig:success} against non-private.}
\label{tbl:auc}
\begin{tabular}{lrrrrc}
    \toprule
    \multicolumn{1}{c}{} & \multicolumn{5}{c}{relative area under curve} \\
    $\mathcal{Q}$  & $\varepsilon=1$  & $\varepsilon=2$ & $\varepsilon=5$ & $\varepsilon=10$ & $\varepsilon=\infty$ \\
    \cmidrule(lr){1-1} \cmidrule(lr){2-6}
    Lap & 0.673 & 0.711 & 0.909 & 0.965 &
    \multirow{2}{*}{1.0} \\
    PRS & 0.660 & 0.862 & 0.835 & 0.771 & \\
    \bottomrule
\end{tabular}
\end{table}

\subsection{Success Ratio}

We evaluate how many times proposed algorithms achieve the target score $\Theta$ over $K$ trials for each setting.
That is
\begin{equation}
    success\_ratio(n) = \frac{|\{k | FST_k \leq n\}_{k\in[K]}|}{K}
\end{equation}
where $FST_k$ is the $FST$ of $k$-th trial.
Here we set $K=20$.

Table \ref{tbl:success_ratio} shows the success ratios for various settings.
The non-private A3C ($\varepsilon=\infty$) succeeds in all trials.
With both of the randomized mechanisms, the algorithms tend to make more successes for larger $\varepsilon$.
The algorithm using PRS gives more successes for $\varepsilon=1, 2$, and the algorithm using the Laplace mechanism shows better for $\varepsilon=5, 10$.

Figure \ref{fig:success} plots the success ratio at $FST$.
The horizontal axis shows the number of submissions, and the vertical axis shows the ratio of the trials in which an average score exceeds $195$ by the update.
With larger $\varepsilon$, both algorithms achieve a high success ratio consuming fewer updates.
The algorithm using PRS makes more successes in the early stage, and the algorithm with Laplace shows more successes by the end of each training process.
This is due to larger loss of gradient by the PRS against the Laplace.

Table \ref{tbl:auc} shows the relative area under curve~(AUC) of Figure \ref{fig:success} against the AUC of non-private A3C.
An algorithm having a larger AUC is regarded as a better algorithm.
Laplace mechanism achieves larger AUC in proportional to $\varepsilon$.
While PRS shows the best at $\varepsilon=2$.

The PRS mechanism gives us more efficient DRL under LDP at some $\varepsilon$.
The PRS mechanism may be a better choice if we require strong privacy guarantees ($\varepsilon < 10$).
Otherwise, the Laplace mechanism seems more promising.

\section{Conclusion}

We studied locally differentially private algorithms for distributed reinforcement learning to obtain a robust policy that performs well across distributed private environments.
We proposed a general framework \method, and its concrete algorithm  \method-A3C with two randomized mechanism for injecting randomness.
Our proposed algorithm leans a robust policy based on the reported noisy gradients that satisfy LDP from local agents.
Without any privacy concerns of the local agents, the algorithm can update a global model to make it robust across various environments.
We also demonstrated how our method learns the robust policy effectively even it is required to satisfy local differential privacy.
This work enables us to obtain a robust agent that performs well across distributed private environments.

\clearpage

\bibliography{ref}

\begin{thebibliography}{10}

\bibitem{tensorflow2015-whitepaper}
M.~Abadi, A.~Agarwal, P.~Barham, E.~Brevdo, Z.~Chen, C.~Citro, G.~S. Corrado,
  A.~Davis, J.~Dean, M.~Devin, S.~Ghemawat, I.~Goodfellow, A.~Harp, G.~Irving,
  M.~Isard, Y.~Jia, R.~Jozefowicz, L.~Kaiser, M.~Kudlur, J.~Levenberg,
  D.~Man\'{e}, R.~Monga, S.~Moore, D.~Murray, C.~Olah, M.~Schuster, J.~Shlens,
  B.~Steiner, I.~Sutskever, K.~Talwar, P.~Tucker, V.~Vanhoucke, V.~Vasudevan,
  F.~Vi\'{e}gas, O.~Vinyals, P.~Warden, M.~Wattenberg, M.~Wicke, Y.~Yu, and
  X.~Zheng.
\newblock {TensorFlow}: Large-scale machine learning on heterogeneous systems,
  2015.
\newblock Software available from tensorflow.org.

\bibitem{Abe2004marketing}
N.~Abe, N.~Verma, C.~Apte, and R.~Schroko.
\newblock Cross channel optimized marketing by reinforcement learning.
\newblock In {\em Proceedings of the Tenth ACM SIGKDD International Conference
  on Knowledge Discovery and Data Mining}, KDD '04, pages 767--772, New York,
  NY, USA, 2004. ACM.

\bibitem{Achlioptas2001random}
D.~Achlioptas.
\newblock Database-friendly random projections.
\newblock In {\em Proceedings of the Twentieth ACM SIGMOD-SIGACT-SIGART
  Symposium on Principles of Database Systems}, PODS '01, pages 274--281, New
  York, NY, USA, 2001. ACM.

\bibitem{Bacchiani2019multi}
G.~Bacchiani, D.~Molinari, and M.~Patander.
\newblock Microscopic traffic simulation by cooperative multi-agent deep
  reinforcement learning.
\newblock In {\em Proceedings of the 18th International Conference on
  Autonomous Agents and MultiAgent Systems}, AAMAS '19, pages 1547--1555,
  Richland, SC, 2019. International Foundation for Autonomous Agents and
  Multiagent Systems.

\bibitem{Barto1983cart}
A.~G. {Barto}, R.~S. {Sutton}, and C.~W. {Anderson}.
\newblock Neuronlike adaptive elements that can solve difficult learning
  control problems.
\newblock {\em IEEE Transactions on Systems, Man, and Cybernetics},
  SMC-13(5):834--846, Sep. 1983.

\bibitem{Bingham2001random}
E.~Bingham and H.~Mannila.
\newblock Random projection in dimensionality reduction: Applications to image
  and text data.
\newblock In {\em Proceedings of the Seventh ACM SIGKDD International
  Conference on Knowledge Discovery and Data Mining}, KDD '01, pages 245--250,
  New York, NY, USA, 2001. ACM.

\bibitem{openAiGym}
G.~Brockman, V.~Cheung, L.~Pettersson, J.~Schneider, J.~Schulman, J.~Tang, and
  W.~Zaremba.
\newblock Openai gym, 2016.

\bibitem{chamikara2019local}
M.~Chamikara, P.~Bertok, I.~Khalil, D.~Liu, and S.~Camtepe.
\newblock Local differential privacy for deep learning.
\newblock {\em arXiv preprint arXiv:1908.02997}, 2019.

\bibitem{Cogill2006load}
R.~Cogill, M.~Rotkowitz, B.~Van~Roy, and S.~Lall.
\newblock An approximate dynamic programming approach to decentralized control
  of stochastic systems.
\newblock In B.~A. Francis, M.~C. Smith, and J.~C. Willems, editors, {\em
  Control of Uncertain Systems: Modelling, Approximation, and Design}, pages
  243--256, Berlin, Heidelberg, 2006. Springer Berlin Heidelberg.

\bibitem{Ding2017telemetry}
B.~Ding, J.~Kulkarni, and S.~Yekhanin.
\newblock Collecting telemetry data privately.
\newblock In I.~Guyon, U.~V. Luxburg, S.~Bengio, H.~Wallach, R.~Fergus,
  S.~Vishwanathan, and R.~Garnett, editors, {\em Advances in Neural Information
  Processing Systems 30}, pages 3571--3580. Curran Associates, Inc., 2017.

\bibitem{Ding2018comparing}
B.~Ding, H.~Nori, P.~Li, and J.~Allen.
\newblock Comparing population means under local differential privacy: with
  significance and power.
\newblock In {\em Thirty-Second AAAI Conference on Artificial Intelligence},
  2018.

\bibitem{Duchi2013minimax}
J.~C. {Duchi}, M.~I. {Jordan}, and M.~J. {Wainwright}.
\newblock Local privacy and statistical minimax rates.
\newblock In {\em 2013 IEEE 54th Annual Symposium on Foundations of Computer
  Science}, pages 429--438, Oct 2013.

\bibitem{dwork2006cali}
C.~Dwork, F.~McSherry, K.~Nissim, and A.~Smith.
\newblock Calibrating noise to sensitivity in private data analysis.
\newblock In S.~Halevi and T.~Rabin, editors, {\em Theory of Cryptography},
  pages 265--284, Berlin, Heidelberg, 2006. Springer Berlin Heidelberg.

\bibitem{Fortunato2018noisy}
M.~Fortunato, M.~G. Azar, B.~Piot, J.~Menick, M.~Hessel, I.~Osband, A.~Graves,
  V.~Mnih, R.~Munos, D.~Hassabis, O.~Pietquin, C.~Blundell, and S.~Legg.
\newblock Noisy networks for exploration.
\newblock In {\em International Conference on Learning Representations}, 2018.

\bibitem{Johnson1984jl}
W.~B. Johnson.
\newblock Extensions of lipshitz mapping into hilbert space.
\newblock {\em Conference modern analysis and probability, 1984}, pages
  189--206, 1984.

\bibitem{kasiviswanathan2011can}
S.~P. Kasiviswanathan, H.~K. Lee, K.~Nissim, S.~Raskhodnikova, and A.~Smith.
\newblock What can we learn privately?
\newblock {\em SIAM Journal on Computing}, 40(3):793--826, 2011.

\bibitem{Mnih2016a3c}
V.~Mnih, A.~P. Badia, M.~Mirza, A.~Graves, T.~Harley, T.~P. Lillicrap,
  D.~Silver, and K.~Kavukcuoglu.
\newblock Asynchronous methods for deep reinforcement learning.
\newblock In {\em Proceedings of the 33rd International Conference on
  International Conference on Machine Learning - Volume 48}, ICML'16, pages
  1928--1937. JMLR.org, 2016.

\bibitem{Mnih2015human}
V.~Mnih, K.~Kavukcuoglu, D.~Silver, A.~A. Rusu, J.~Veness, M.~G. Bellemare,
  A.~Graves, M.~Riedmiller, A.~K. Fidjeland, G.~Ostrovski, et~al.
\newblock Human-level control through deep reinforcement learning.
\newblock {\em Nature}, 518(7540):529, 2015.

\bibitem{Morimoto2005robust}
J.~Morimoto and K.~Doya.
\newblock Robust reinforcement learning.
\newblock {\em Neural Computation}, 17(2):335--359, 2005.

\bibitem{nair2015golira}
A.~Nair, P.~Srinivasan, S.~Blackwell, C.~Alcicek, R.~Fearon, A.~De~Maria,
  V.~Panneershelvam, M.~Suleyman, C.~Beattie, S.~Petersen, et~al.
\newblock Massively parallel methods for deep reinforcement learning.
\newblock {\em arXiv preprint arXiv:1507.04296}, 2015.

\bibitem{Palmer2019multi}
G.~Palmer, R.~Savani, and K.~Tuyls.
\newblock Negative update intervals in deep multi-agent reinforcement learning.
\newblock In {\em Proceedings of the 18th International Conference on
  Autonomous Agents and MultiAgent Systems}, AAMAS '19, pages 43--51, Richland,
  SC, 2019. International Foundation for Autonomous Agents and Multiagent
  Systems.

\bibitem{pan2019you}
X.~Pan, W.~Wang, X.~Zhang, B.~Li, J.~Yi, and D.~Song.
\newblock How you act tells a lot: Privacy-leaking attack on deep reinforcement
  learning.
\newblock In {\em Proceedings of the 18th International Conference on
  Autonomous Agents and MultiAgent Systems}, pages 368--376. International
  Foundation for Autonomous Agents and Multiagent Systems, 2019.

\bibitem{Pinto2017robust}
L.~Pinto, J.~Davidson, R.~Sukthankar, and A.~Gupta.
\newblock Robust adversarial reinforcement learning.
\newblock In {\em Proceedings of the 34th International Conference on Machine
  Learning - Volume 70}, ICML'17, pages 2817--2826. JMLR.org, 2017.

\bibitem{rajeswaran2016epopt}
A.~Rajeswaran, S.~Ghotra, B.~Ravindran, and S.~Levine.
\newblock Epopt: Learning robust neural network policies using model ensembles.
\newblock {\em arXiv preprint arXiv:1610.01283}, 2016.

\bibitem{Sakuma2008pprl}
J.~Sakuma, S.~Kobayashi, and R.~N. Wright.
\newblock Privacy-preserving reinforcement learning.
\newblock In {\em Proceedings of the 25th International Conference on Machine
  Learning}, ICML '08, pages 864--871, New York, NY, USA, 2008. ACM.

\bibitem{Shalev-Shwartz2016driving}
S.~Shalev{-}Shwartz, S.~Shammah, and A.~Shashua.
\newblock Safe, multi-agent, reinforcement learning for autonomous driving.
\newblock {\em CoRR}, abs/1610.03295, 2016.

\bibitem{wang2019privacy}
B.~Wang and N.~Hegde.
\newblock Privacy-preserving q-learning with functional noise in continuous
  spaces.
\newblock In {\em Advances in Neural Information Processing Systems}, pages
  11323--11333, 2019.

\bibitem{Wang2019collecting}
N.~{Wang}, X.~{Xiao}, Y.~{Yang}, J.~{Zhao}, S.~C. {Hui}, H.~{Shin}, J.~{Shin},
  and G.~{Yu}.
\newblock Collecting and analyzing multidimensional data with local
  differential privacy.
\newblock In {\em 2019 IEEE 35th International Conference on Data Engineering
  (ICDE)}, pages 638--649, April 2019.

\bibitem{Zhang2005nego}
S.~Zhang and F.~Makedon.
\newblock Privacy preserving learning in negotiation.
\newblock In {\em Proceedings of the 2005 ACM Symposium on Applied Computing},
  SAC '05, pages 821--825, New York, NY, USA, 2005. ACM.

\bibitem{zhu2019applying}
T.~Zhu and S.~Y. Philip.
\newblock Applying differential privacy mechanism in artificial intelligence.
\newblock In {\em 2019 IEEE 39th International Conference on Distributed
  Computing Systems (ICDCS)}, pages 1601--1609. IEEE, 2019.

\end{thebibliography}
\bibliographystyle{abbrv}

\end{document}